\documentclass[preprint,12pt,3p]{elsarticle}
\usepackage{amsmath,amssymb,amsthm}
\usepackage{booktabs}
\usepackage{graphicx}
\usepackage{bm}
\usepackage[hidelinks]{hyperref}

\newtheorem{theorem}{Theorem}
\newtheorem{lemma}{Lemma}
\newtheorem{proposition}{Proposition}
\newtheorem{definition}{Definition}
\theoremstyle{remark}
\newtheorem*{remark}{Remark}

\journal{Chaos, Solitons \& Fractals}

\begin{document}

\begin{frontmatter}

\title{Identifiability of a dissipative knowledge-dynamics model:\\
exact recovery under designed excitation,\\
degeneration on observational data}

\author[inn]{Arman A. Kostanian\corref{cor}}
\ead{a.kostanian@innopolis.university}
\author[hse]{Armen L. Beklaryan}

\address[inn]{Innopolis University, Innopolis, Russia}
\address[hse]{HSE University, Moscow, Russia}
\cortext[cor]{Corresponding author.}

\begin{abstract}
Human learning is a dissipative dynamical process: mastery accumulates through practice, decays
through forgetting, and propagates across interdependent concepts. We model it as a nonlinear
dissipative system of ordinary differential equations whose parameters are mechanistically
meaningful --- a transfer-of-practice matrix, per-concept forgetting
rates, and a saturating practice-response gain --- and we study when those parameters can actually
be recovered from data.

For two concepts, we prove a constructive structural identifiability theorem for the associated inverse problem under explicit excitation conditions, together
with monotonicity, robustness and L-stability results. We derive an implicit L-stable scheme
for the dissipative subsystem and a batched solver numerically equivalent to the per-trajectory
formulation (bit-exact predictions, gradients to $10^{-10}$) yet two orders of magnitude faster,
making estimation feasible on cohorts of $2.5\times10^{4}$ learners.

The empirical study is two-sided. Under the theorem's excitation conditions, synthetic recovery is
exact: parameters to machine precision, prerequisite structure at $F_1 = 1.0$. On large
observational benchmarks it is not. An apparently strong recovery --- forgetting rates correlating
with topic difficulty at Spearman $\rho = 0.83$ --- is refuted by four independent controls: it
survives destroying the temporal order of the data, is matched by a classical Bayesian baseline,
and is unaffected by removing real timestamps. A single shared forgetting rate attains the same accuracy, and replaying the real schedule with identical true rates reproduces most of the correlation. We trace this to the stationary structure of the
model and show that it is consistent with the degeneration anticipated by the theory in the absence of designed excitation. The result
delineates a sharp boundary between identifiable and unidentifiable regimes and yields a
validation protocol for interpretability claims.
\end{abstract}

\begin{keyword}
nonlinear dynamics \sep dissipative systems \sep structural identifiability \sep
inverse problems \sep learning dynamics \sep numerical integration
\end{keyword}

\end{frontmatter}

%% =====================================================================
\section{Introduction}
%% =====================================================================

Learning is a complex dynamical process. A learner's competence is distributed over many
interdependent concepts and evolves in continuous time under competing mechanisms: reinforcement
through practice, dissipation through forgetting, and transfer along prerequisite relations, all
observed only indirectly, through a sparse stream of binary responses. Systems of this kind ---
high-dimensional, dissipative, partially observed, driven by an irregular exogenous input --- are
the natural object of nonlinear dynamics, and they arise throughout the study of complex social
phenomena.

The engineering counterpart of this problem is known as \emph{knowledge tracing}: inferring a
learner's latent state from their response history in order to adapt instruction
\cite{corbett1994knowledge,abdelrahman2023survey}. The field is now dominated by recurrent and
attentional neural models \cite{piech2015deep,zhang2017dkvmn,ghosh2020akt}, which predict responses
accurately but represent the learner state as an uninterpretable latent vector. Three consequences
follow. The state cannot be read: one cannot say what a learner knows or why a prediction was made.
Forgetting, when modelled at all, is inserted as a heuristic decay term chosen to improve fit
rather than derived and tested \cite{nagatani2019forgetting}. And personalisation requires
centralised retraining on raw learner logs.

A natural response is to replace the black box with a mechanistic model: a differential system
whose parameters are, by construction, the quantities of interest --- a transfer matrix encoding
prerequisite coupling, per-concept forgetting rates, a practice-response gain. This is the model we
develop in Section~\ref{sec:model}. But building such a model raises a question that is rarely
asked in this setting, and which we take as our subject: naming a parameter ``forgetting rate''
does not make it one. Whether the estimated value of that parameter has any relation to forgetting
depends on whether the parameter is \emph{identifiable} --- uniquely determined by the
observations --- and identifiability of a dynamical system is never unconditional. It depends on
how the system was excited while it was observed.

This dependence is well understood in systems biology, viral dynamics and control theory
\cite{bellman1970structural,ljung1994global,walter1997identification}, where structural
identifiability is analysed as a property of a model \emph{together with} its input, and where the
analysis is routinely used to design experiments that make parameters recoverable
\cite{villaverde2018exciting,villaverde2019inputdependent}. It has not, to our knowledge, been
brought to bear on models of knowledge dynamics --- even as that field increasingly reports
``interpretable'' parameters and reads scientific meaning into their values.

We therefore study a dissipative ODE model of learning from the standpoint of the inverse problem
it defines, and the picture that emerges has two sides.

On one side, the inverse problem is solvable, and we characterise exactly when. We prove a
structural identifiability theorem: under a practice schedule containing rest intervals and
intervals of isolated, non-constant practice of each concept, the parameters are uniquely
determined, with a constructive closed-form reconstruction in the two-concept case, accompanied by
monotonicity, robustness and L-stability results. Numerically, recovery under these conditions is
exact: parameters to machine precision, and prerequisite coupling with $F_1 = 1.0$ on synthetic
cohorts.

On the other side, the conditions matter. Standard educational datasets are \emph{observational}:
they record whatever practice learners happened to undertake, and they do not contain the regimes
the theorem requires. Trained on such data, the model produces a parameter vector that correlates
strongly with an external reference --- recovered forgetting rates track measured topic difficulty
at Spearman $\rho = 0.83$ --- which, by the evidential standards common in this literature, would
be reported as successful recovery of an interpretable quantity. It is not. Four independent
controls show that the correlation survives destroying the temporal order of the data, is
reproduced by a three-decade-old Bayesian baseline, and is unaffected by supplying real elapsed
time. We trace the effect to the stationary structure of the equations, which forces the decay
parameter to absorb per-topic response frequency, and we show that it is precisely the
degeneration that the identifiability analysis anticipates in the absence of excitation; the link is an analogy rather than a formal implication, and a formal treatment under discrete binary observation is given in \cite{kostanian2026exact}.

The two sides together delineate a sharp qualitative boundary in the same dynamical system: a
regime in which mechanism is recoverable from observation, and a regime in which it is not, with
the transition governed not by the volume of data but by the design of its collection. We regard
this boundary, rather than any single fitted model, as the contribution of this paper --- together
with its practical corollary, that claims of parameter interpretability should be accompanied by
controls capable of detecting the failure we document.

\paragraph{Contributions}
\begin{enumerate}
\item A dissipative-ODE model of knowledge dynamics with a structurally factorised,
prerequisite-aware transfer matrix and mechanistically meaningful parameters
(Section~\ref{sec:model}).
\item A structural identifiability theorem for the associated inverse problem under explicit
excitation conditions, constructive for $K=2$, with monotonicity, robustness and L-stability
results (Section~\ref{sec:ident}).
\item Numerical methods for the forward and inverse problems: an L-stable implicit scheme for
the dissipative subsystem, a memory-efficient adjoint formulation, and a batched solver proved
numerically equivalent to the per-trajectory formulation and two orders of magnitude faster, which
is what makes the reproducibility and control experiments feasible (Section~\ref{sec:numerics}).
\item A two-sided empirical study on synthetic and four real benchmarks: exact recovery under
excitation, and --- established by four independent controls --- degeneration of recovery into a
reparameterisation of response frequencies on observational data, together with a structural
explanation of why this occurs (Sections~\ref{sec:setup}--\ref{sec:results}).
\item A minimal validation protocol for claims of parameter interpretability, applied to our own
model, which reverses a conclusion that conventional evidence would have supported
(Sections~\ref{sec:setup} and \ref{sec:discussion}).
\end{enumerate}

The remainder of the paper is organised as follows. Section~\ref{sec:related} positions the work,
Section~\ref{sec:model} introduces the model, Section~\ref{sec:ident} the identifiability theory,
and Section~\ref{sec:numerics} the numerical methods. Sections~\ref{sec:setup} and
\ref{sec:results} describe the experimental protocol and results,
Section~\ref{sec:discussion} discusses implications and limitations, and
Section~\ref{sec:conclusion} concludes.

%% =====================================================================
\section{Related work}\label{sec:related}
%% =====================================================================

\subsection{Knowledge tracing: from interpretable to accurate}

Modelling the evolution of a learner's mastery from a sequence of graded responses has two
traditions. Bayesian Knowledge Tracing (BKT) \cite{corbett1994knowledge} represents each skill by a
two-state hidden Markov model with parameters --- prior, learn, slip, guess --- that have direct
pedagogical readings, and remains in production use in intelligent tutoring systems. Deep Knowledge
Tracing \cite{piech2015deep} and its successors \cite{zhang2017dkvmn,pandey2019sakt,ghosh2020akt}
replace the interpretable state with a recurrent or attentional latent vector, gaining substantial
predictive accuracy at the cost of any mechanistic reading of the state.

A large body of work adds forgetting to both traditions, motivated by the classical retention curve
\cite{ebbinghaus1913memory}: decay terms conditioned on elapsed time or repetition counts
\cite{nagatani2019forgetting,settles2016spaced}, memory-augmented and graph-based variants, and
concept-graph models that alternate forgetting, aggregation and update phases. These models
introduce forgetting as an architectural component chosen to improve fit; the resulting decay
parameters are typically inspected qualitatively rather than tested for recoverability. Our work
differs in exactly this respect: we ask under which conditions such a decay parameter is
\emph{identifiable}, and what its estimate means when those conditions fail.

\subsection{Identifiability in student modelling}\label{sec:related:ident}

Identifiability is not a new concern in this field, and we make no claim of priority for raising
it. Beck and Chang \cite{beck2007identifiability} observed that BKT models with different
parameters can produce identical predictions, and framed this as an identifiability problem; van de
Sande \cite{vandesande2013properties} characterised the resulting families of observationally
equivalent parameters analytically. Doroudi and Brunskill \cite{doroudi2017misidentified}
subsequently showed that the phenomenon so described is not, strictly, non-identifiability:
invoking results on the identifiability of hidden Markov models, they proved that BKT is
identifiable under mild parameter conditions. They instead named the real difficulty
\emph{semantic model degeneracy} --- the parameters that best fit the data can be inconsistent
with the conceptual assumptions of the model --- and demonstrated it by fitting BKT to data
generated from alternative learning processes.

Our contribution sits on this line and extends it in two directions. First, the existing analysis
concerns a \emph{discrete-time hidden Markov} model with per-skill parameters; we treat a
\emph{continuous-time dissipative ODE} with coupled concepts, where identifiability is a question
about an inverse problem for a differential system rather than about an HMM likelihood, and where
the answer depends explicitly on the practice schedule. To our knowledge this is the first
structural identifiability result of that kind for knowledge dynamics. Second, we show that
semantic degeneracy is not merely a possibility to be guarded against but the \emph{generic
outcome} on standard observational benchmarks: a parameter named ``forgetting rate'' is driven, at
the optimum, to encode per-topic answer frequency. We further show that this degeneracy is
invisible to the validation practices currently used in the field (Section~\ref{sec:related:val}).

\subsection{Structural identifiability and excitation in dynamical systems}

Outside education, structural identifiability is a mature subject in systems biology, viral
dynamics and control theory. The standard notion is that a parameter is structurally identifiable
if its value can in principle be determined from noise-free observation of the model output
\cite{bellman1970structural,ljung1994global,walter1997identification}; if it is not, its estimated
numerical value carries no meaning and model-based conclusions about unobserved states may be
wrong \cite{villaverde2018exciting}. Crucially, structural identifiability is conditional on the
\emph{input}: results are stated under sufficiently exciting --- persistently exciting --- inputs,
and methods exist to determine how exciting an input must be for a given model, precisely so that
experiments can be designed to make parameters identifiable
\cite{villaverde2018exciting,villaverde2019inputdependent}. The distinction between structural and
practical identifiability \cite{raue2009structural,wieland2021identifiability} is central to what
follows: our theorem establishes the former, and our experiments show that the latter can
nonetheless fail on passively collected data. The same conditionality appears in the theory of
ill-posed inverse problems, where regularisation compensates for information the data do not
contain \cite{tikhonov1977illposed}.

Closest to the present work, Norden \emph{et al.} \cite{norden2025identifiability} argue that
structural identifiability matters for machine learning with partially observed dynamical systems,
using identifiability analysis to relate parameter configurations that produce identical outputs.
Our aim is complementary: rather than exploiting known unidentifiability to constrain a predictor,
we derive the identifiability conditions for a specific dissipative model and then use them
diagnostically, to ask whether a widely used class of observational datasets satisfies them --- and
to show what the estimated parameters mean when it does not.

We import this apparatus into learning dynamics. The practice schedule plays the role of the
input; Theorem~\ref{thm:ident} states the excitation regimes under which the parameters of the
knowledge-dynamics system are uniquely recoverable, and gives the reconstruction in closed form for
the two-concept case. The empirical part of the paper is then, in the language of that literature,
an assessment of whether standard educational logs constitute a sufficiently exciting input. They
do not.

\subsection{Validating ``interpretability'': current practice}\label{sec:related:val}

Interpretable knowledge tracing typically supports its claims by correlating a learned quantity
with an external reference: skill difficulty, expert prerequisite annotations, or review intervals.
Our position is not that such results are wrong, but that the evidential standard is too weak to
distinguish recovery from artefact. We show (Section~\ref{sec:results:controls}) that a strong
correlation of this type --- $\rho = 0.83$ between a recovered forgetting rate and topic
difficulty --- survives the destruction of all temporal ordering in the data and is matched by a
classical BKT baseline, and therefore establishes nothing about the dynamics the model claims to
capture. Accordingly we propose, and apply to our own model, a minimal protocol of a shuffle
control and a simple-baseline control (Section~\ref{sec:setup:protocol}).

\subsection{Physics-informed and continuous-time models}

Physics-informed neural networks embed governing equations into learning, typically as residual
penalties in the loss \cite{raissi2019pinn}. Neural ODEs \cite{chen2018neuralode} provide the
continuous-time counterpart of deep architectures. Our model is physics-informed in the
\emph{hard-constraint} sense: rather than penalising deviation from an assumed equation, we
restrict the hypothesis class to a fixed dissipative ODE whose parameters are the objects of
interest. This distinction matters here, since all results reported below are obtained with the
residual weight set to zero; the mechanistic content resides in the model structure, not in a soft
penalty. We are not aware of prior work that applies structural identifiability analysis to a
continuous-time model of knowledge dynamics.

%% =====================================================================
\section{The Cognitive-PINN model}\label{sec:model}
%% =====================================================================

\subsection{State and governing equation}

Let a curriculum consist of $K$ concepts.\footnote{The model name is historical and alludes to
the physics-informed learning paradigm. We stress that the model is physics-informed in the
\emph{hard-constraint} sense: the governing equation defines the hypothesis class rather than
entering the loss as a residual penalty as in \cite{raissi2019pinn}. In all experiments reported
here the residual weight is zero, and the mechanistic content resides entirely in the model
structure.} We describe a learner by a mastery state
$\bm{k}(t) \in [0,1]^K$, where $k_j(t)$ is competence on concept $j$, evolving as

\begin{equation}
\frac{d\bm{k}}{dt}
= \underbrace{\bm{A}\,\bm{\phi}\!\left(\bm{p}(t)\right) \odot \left(\bm{1}-\bm{k}\right)}_{\text{acquisition}}
\;-\;\underbrace{\bm{\Lambda}\odot\bm{k}}_{\text{dissipation}}
\;+\;\underbrace{\bm{\sigma}\odot d\bm{W}_t}_{\text{fluctuation}},
\label{eq:main}
\end{equation}

where $\odot$ is the Hadamard product, $\bm{p}(t)\in\mathbb{R}_{\ge 0}^K$ is the practice intensity
--- the exogenous input --- $\bm{A}\in\mathbb{R}_{\ge 0}^{K\times K}$ is a transfer matrix,
$\bm{\Lambda}\in\mathbb{R}_{>0}^{K}$ collects decay rates, and $\bm{W}_t$ is a Wiener process. In
the analysis of Sections~\ref{sec:ident}--\ref{sec:numerics} we set $\bm{\sigma}=\bm{0}$; the
stochastic term models trial-to-trial fluctuation and is not estimated in this work.

\paragraph{Acquisition} Practice raises mastery with diminishing returns: the factor
$(\bm{1}-\bm{k})$ enforces saturation at $k_j=1$ and keeps the state in the unit box. The response
is itself saturating,
\begin{equation}
\phi_j(p) = 1 - e^{-\alpha_j p}, \qquad \alpha_j > 0,
\label{eq:phi}
\end{equation}
increasing and concave with $\phi_j(0)=0$: additional repetitions within a session contribute
progressively less.

\paragraph{Dissipation} Unpractised competence decays at rate $\Lambda_j$. This makes the system
dissipative in the strict sense --- without input, $\bm{k}$ contracts exponentially toward the
origin.

\paragraph{Coupling} Off-diagonal entries $A_{ij}$ transfer the effect of practising concept $j$ to
concept $i$; the diagonal carries direct acquisition.

\subsection{Structure of the transfer matrix}

Left unconstrained, $\bm{A}$ has $K^2$ free entries and is poorly behaved. We factorise
\begin{equation}
\bm{A} = \bm{D}^{1/2}\left(\bm{Q}_{\mathrm{prior}}\odot\bm{B}\right)\bm{D}^{1/2},
\label{eq:A}
\end{equation}

The factor $\bm{B}$ is parameterised as a non-negative sum of a low-rank term and a diagonal,
\begin{equation}
\bm{B} = \operatorname{softplus}\!\left(\bm{U}\bm{V}^{\top}\right)
+ \operatorname{diag}\!\left(\bm{d}\right),
\qquad \bm{U},\bm{V}\in\mathbb{R}^{K\times r},\quad \bm{d}>0 ,
\label{eq:B}
\end{equation}
where $r \ll K$ is the rank of the decomposition and $\bm{d}$ carries direct acquisition. The
element-wise $\operatorname{softplus}$ guarantees $B_{ij}\ge 0$ and hence $A_{ij}\ge 0$ --- the
condition required by Lemma~\ref{lem:box} for invariance of the unit box. Non-negativity of the
transfer coefficients is therefore not postulated but enforced by the parameterisation; the same
property reflects the substantive requirement that practising one concept cannot lower mastery of
another.
where $\bm{Q}_{\mathrm{prior}}\in\{0,1\}^{K\times K}$ is an optional mask of known prerequisite
relations, and the symmetric scaling
with diagonal $\bm{D}$ normalises the influence a concept can exert and receive. Under the
input-mediated coupling of \eqref{eq:main} the state equations are decoupled for a given input, so
this scaling plays no role in the stability of the integration schemes of Section~\ref{sec:numerics}. In all experiments below we set
$\bm{Q}_{\mathrm{prior}}=\bm{1}$: supplying an expert prerequisite graph and subsequently
``recovering'' it would make structural validation circular.

\subsection{Observation model}

The state is latent; what is observed is a sequence of graded responses. For item $m$ at time
$t_m$, with $\bm{q}_m\in\{0,1\}^K$ the required concepts,
\begin{equation}
\Pr\left(y_m = 1 \mid \bm{k}(t_m)\right)
= \sigma\!\left(\beta_m\left(\bar{k}_m - \tfrac{1}{2}\right) - \delta_m\right),
\qquad
\bar{k}_m = \frac{\bm{q}_m^{\top}\bm{k}(t_m)}{\max\left(\bm{q}_m^{\top}\bm{1},1\right)},
\label{eq:emission}
\end{equation}
with $\sigma$ the logistic function and $(\beta_m,\delta_m)$ item discrimination and difficulty in
the manner of item-response theory; $\beta_m>0$. In all experiments of Section~\ref{sec:results} these parameters are indexed by topic, one pair $(\beta_j,\delta_j)$ per topic, so that $\delta_j$ absorbs the mean difficulty of the items of topic $j$. For single-concept items \eqref{eq:emission}
reduces to the standard score on the practised concept.

\begin{remark}[identifiability of the observation model]
In item-response models with free item parameters, the scale and location of the latent variable
are determined only up to an affine transformation: a shift and a rescaling of $\bm{k}$ can be
absorbed by $(\beta_m,\delta_m)$. Here that indeterminacy is removed not by a normalisation
convention but by the dynamics themselves. The factor $(\bm{1}-\bm{k})$ in the acquisition term and
the term $-\bm{\Lambda}\odot\bm{k}$ pin both boundaries of the state space: by Lemma~\ref{lem:box}
the trajectory remains in $(0,1)^K$, and $k_j=0$, $k_j=1$ are substantively defined states --- no
mastery and full mastery. The scale and location of $\bm{k}$ are therefore fixed by the structure
of~\eqref{eq:main}, and $(\beta_m,\delta_m)$ describe item properties at a fixed state scale. The
constraint $\beta_m>0$ additionally excludes a reflection of the scale.

We stress that Theorem~\ref{thm:ident} is a \emph{structural} result, stated in the idealisation of
continuous observation of $\bm{k}(\cdot)$. Practical identifiability under sparse binary
observations through~\eqref{eq:emission} is a separate question, which this paper addresses
empirically: its failure on observational data is documented in
Sections~\ref{sec:results:apparent}--\ref{sec:results:why}.
\end{remark}

\subsection{Estimation problem}

Estimation minimises the negative log-likelihood of \eqref{eq:emission} under trajectories
generated by \eqref{eq:main},
\begin{equation}
\mathcal{L}(\theta)
= -\frac{1}{M}\sum_{s}\sum_{m}\log \Pr\!\left(y^{(s)}_m \mid \bm{k}^{(s)}(t_m;\theta)\right)
+ \mathcal{R}(\theta),
\qquad \theta = \left(\bm{B}, \bm{\Lambda}, \bm{\alpha}\right).
\label{eq:loss}
\end{equation}

Two remarks matter for the results below. First, the model is physics-informed in the
hard-constraint sense: the governing equation is not a residual penalty added to the loss but is
built into the hypothesis class, so every candidate trajectory solves \eqref{eq:main} by
construction. All experiments therefore use a zero residual weight, and this makes the objective
identical for the per-trajectory and batched formulations of Section~\ref{sec:numerics}. Second,
$\mathcal{R}$ is zero unless stated otherwise: an $\ell_1$ penalty on $\bm{B}$ is natural if sparse
structure is expected, but it drives the off-diagonal of $\bm{A}$ toward zero and thereby
suppresses precisely the structure a recovery experiment is meant to test.

\subsection{Interpretation and its limits}\label{sec:model:limits}

Every parameter has a stated mechanistic reading: $\bm{\Lambda}$ as forgetting rates, $\bm{A}$ as
prerequisite transfer, $\bm{\alpha}$ as practice responsiveness. This motivates the model, and it
is also what this paper subjects to scrutiny. Under sustained practice at intensity $\bm{p}$,
\eqref{eq:main} admits the element-wise steady state
\begin{equation}
k^{*}_j = \frac{\left[\bm{A}\bm{\phi}(\bm{p})\right]_j}
{\left[\bm{A}\bm{\phi}(\bm{p})\right]_j + \Lambda_j},
\label{eq:steady}
\end{equation}
so $\Lambda_j$ enters the predicted response probability of concept $j$ directly. Fitting
\eqref{eq:loss} therefore exerts pressure on $\Lambda_j$ to reproduce the observed success rate on
that concept, irrespective of whether the data contain information about decay over time. Whether
the estimate is a forgetting rate or a reparameterisation of difficulty is not settled by the
model's construction; it is an identifiability question.

%% =====================================================================
\section{Structural identifiability}\label{sec:ident}
%% =====================================================================

\subsection{Setting and definition}

For $K=2$ the deterministic part of \eqref{eq:main} reads
\begin{equation}
\begin{aligned}
\dot k_1 &= \left[A_{11}\phi_1(p_1(t)) + A_{12}\phi_2(p_2(t))\right](1-k_1) - \Lambda_1 k_1, \\
\dot k_2 &= \left[A_{21}\phi_1(p_1(t)) + A_{22}\phi_2(p_2(t))\right](1-k_2) - \Lambda_2 k_2,
\end{aligned}
\label{eq:K2}
\end{equation}
with unknown $\theta = (A_{11},A_{12},A_{21},A_{22},\alpha_1,\alpha_2,\Lambda_1,\Lambda_2)
\in \Theta := \mathbb{R}_{+}^{8}$. We treat $\bm{k}(\cdot)$ and $\bm{p}(\cdot)$ as observed, the
standard idealisation of structural identifiability; observation through the discrete emission
\eqref{eq:emission} is addressed in Lemma~\ref{lem:robust}.

\begin{definition}[Indistinguishability]
Two parameterisations $\theta,\theta'\in\Theta$ are \emph{indistinguishable} if for every
admissible schedule $\bm{p}(\cdot)$ and every $\bm{k}_0\in(0,1)^2$ the trajectories coincide for
all $t\ge 0$. The model is \emph{structurally identifiable} if indistinguishability implies
$\theta=\theta'$.
\end{definition}

\subsection{Excitation conditions}

Identifiability is a property of the model together with its input. We require the schedule to
contain:
\begin{itemize}
\item \textbf{(R) Rest.} An interval $I_R$ of positive length with $p_1 \equiv p_2 \equiv 0$.
\item \textbf{(S1) Isolated non-constant practice of concept 1.} An interval $I_1$ of positive
length with $p_1 > 0$, $p_2 \equiv 0$, and $p_1$ non-constant,
$\sup_{I_1}p_1 > \inf_{I_1}p_1$.
\item \textbf{(S2)} The symmetric condition for concept 2.
\end{itemize}

Each regime removes part of the dynamics: (R) switches off acquisition, exposing pure dissipation;
(S1) switches off the second input channel, exposing the response of both concepts to a single
driving signal. The requirement that $p_1$ be non-constant separates the gain $\alpha_1$ from the
transfer weight $A_{11}$: a single practice intensity cannot distinguish a weakly responding
concept driven hard from a strongly responding concept driven gently.

\subsection{Two lemmas}

\begin{lemma}[Invariance of the unit box]\label{lem:box}
If $\bm{k}(0)\in(0,1)^2$, $A_{ij}\ge 0$, $\Lambda_i>0$ and $p_j\ge 0$, then
$\bm{k}(t)\in(0,1)^2$ for all $t\ge 0$.
\end{lemma}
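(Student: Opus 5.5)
The key observation is that, for a given schedule $\bm{p}(\cdot)$, the system \eqref{eq:K2} decouples: each component satisfies a scalar linear ODE
\begin{equation*}
\dot k_i = a_i(t)\,(1-k_i) - \Lambda_i k_i = a_i(t) - \bigl(a_i(t)+\Lambda_i\bigr)k_i,
\qquad a_i(t) := A_{i1}\phi_1(p_1(t)) + A_{i2}\phi_2(p_2(t)) \ge 0 .
\end{equation*}
Nonnegativity of $a_i$ follows from $A_{ij}\ge 0$, $\alpha_j>0$ and $p_j\ge 0$, since then $\phi_j(p_j)=1-e^{-\alpha_j p_j}\in[0,1)$. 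I will assume the schedule is such that $a_i$ is locally integrable (e.g.\ piecewise continuous $p$). Solutions are then understood in the Carath\'eodory sense and exist globally and uniquely, because the equation is linear in $k_i$.

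The plan is to solve this explicitly by variation of constants rather than to argue by barrier functions. Set $G_i(t)=\int_0^t \bigl(a_i(s)+\Lambda_i\bigr)\,ds$. Then
\begin{equation*}
k_i(t) = e^{-G_i(t)}k_i(0) + \int_0^t e^{-(G_i(t)-G_i(s))}\,a_i(s)\,ds .
\end{equation*}
\textbf{Positivity} is immediate from this formula: the first term is strictly positive because $k_i(0)>0$, and the integral is nonnegative because $a_i\ge 0$.

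For the upper bound, I apply the same reasoning to $u_i=1-k_i$, which satisfies
\begin{equation*}
\dot u_i = \Lambda_i - \bigl(a_i(t)+\Lambda_i\bigr)u_i .
\end{equation*}
Variation of constants gives
\begin{equation*}
u_i(t)=e^{-G_i(t)}u_i(0)+\int_0^t e^{-(G_i(t)-G_i(s))}\Lambda_i\,ds .
\end{equation*}
This is strictly positive, since $u_i(0)>0$ and $\Lambda_i>0$. Hence $k_i(t)<1$.

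The only delicate point is regularity of the input. If $p$ is merely measurable, one must check that $a_i$ is locally integrable and that the explicit formula is the unique absolutely continuous solution. This holds automatically, because $0\le a_i\le A_{i1}+A_{i2}$ is bounded. Boundedness of $a_i$ also rules out finite-time blow-up, so the solution, and with it the invariance of $(0,1)^2$, extends to all $t\ge 0$.

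Note that strictness of the upper bound uses $\Lambda_i>0$ only through the second term; the first term alone already suffices. Strictness of the lower bound likewise needs only $k_i(0)>0$. No step is a real obstacle.
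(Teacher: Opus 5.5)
Your proof is correct. It rests on the same mechanism as the paper's: an explicit solution of the scalar linear equation for each $k_i$, written as a positively weighted combination. You carry it out in greater generality, though.

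The paper assumes the input is piecewise constant. On each constancy interval it writes $k_i(t)=k_i^\ast+(k_i(t_0)-k_i^\ast)e^{-s_i(t-t_0)}$ with $k_i^\ast=u_i/s_i\in[0,1)$. It reads this as a convex combination of $k_i(t_0)$ and $k_i^\ast$ with positive weight on $k_i(t_0)$, and then inducts over the intervals.

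Your variation-of-constants formula collapses to exactly this when $a_i$ is constant. Your equation for $1-k_i$ is the same convex combination read from the upper face. But by working with $G_i(t)=\int_0^t(a_i(s)+\Lambda_i)\,ds$ you dispense with piecewise constancy and with the induction, which tacitly needs the switching times to be locally finite. This is not idle: the lemma as stated restricts the schedule only by $p_j\ge 0$, and Theorem~\ref{thm:ident} invokes it for arbitrary admissible schedules.

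\textbf{What each route buys.} The paper's version is shorter, matches the numerical setting (input constant between interactions), and links directly to the stationary value $k_i^\ast$. Yours covers every bounded measurable schedule and handles the zero-input face without a separate remark. It also shows that $\Lambda_i\ge 0$ already suffices for both strict bounds, whereas the paper's closed form needs $s_i>0$ to define $k_i^\ast$.
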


\begin{proof}
On every interval of constant input the equation for $k_i$ is linear, $\dot k_i=u_i-s_ik_i$ with $u_i=\sum_jA_{ij}\phi_j(p_j)\ge0$ and $s_i=u_i+\Lambda_i>0$, with solution $k_i(t)=k_i^\ast+(k_i(t_0)-k_i^\ast)e^{-s_i(t-t_0)}$, $k_i^\ast=u_i/s_i\in[0,1)$. Thus $k_i(t)$ is a convex combination of $k_i(t_0)\in(0,1)$ and $k_i^\ast\in[0,1)$ with positive weight on $k_i(t_0)$, so $k_i(t)\in(0,1)$; the claim follows by induction over intervals. Under zero input the face $k_i=0$ is invariant rather than repelling, but a trajectory from the interior does not reach it, since $k_i(t)=k_i(t_0)e^{-\Lambda_i(t-t_0)}>0$.
\end{proof}

Lemma~\ref{lem:box} guarantees that the divisions by $k_i$ and $1-k_i$ below are well defined. The
next lemma is the analytical core of the argument.

\begin{lemma}[Monotone signature of the saturating response]\label{lem:H}
For $r>1$ let
\begin{equation}
\tilde H_r(u) := \frac{1-e^{-ru}}{1-e^{-u}}, \qquad u>0 .
\label{eq:H}
\end{equation}
Then $\tilde H_r$ is strictly decreasing on $(0,\infty)$, with
$\lim_{u\to 0^+}\tilde H_r(u)=r$ and $\lim_{u\to\infty}\tilde H_r(u)=1$; hence it is a bijection
from $(0,\infty)$ onto $(1,r)$.
\end{lemma}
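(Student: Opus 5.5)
The plan is to treat the two limits directly and then prove strict monotonicity by a sign argument on the derivative, reduced to a one-variable inequality. The bijection onto $(1,r)$ will then follow from continuity and monotonicity via the intermediate value theorem.

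\emph{Limits.} As $u\to0^+$, expand $1-e^{-ru}=ru+O(u^2)$ and $1-e^{-u}=u+O(u^2)$. The quotient therefore tends to $r$; equivalently, l'H\^opital gives $re^{-ru}/e^{-u}\to r$. As $u\to\infty$, both numerator and denominator tend to $1$, so $\tilde H_r(u)\to1$.

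\emph{Monotonicity.} Write $\tilde H_r=f/g$ with $f(u)=1-e^{-ru}$ and $g(u)=1-e^{-u}$, both positive on $(0,\infty)$ because $r>1$. The sign of $\tilde H_r'$ is the sign of
\[
N(u)=f'g-fg' = re^{-ru}(1-e^{-u})-e^{-u}(1-e^{-ru}).
\]
Multiplying by $e^{ru}e^{u}>0$ gives
\[
re^{u}-r-e^{ru}+1 = (e^{ru}-1)\left(\frac{r(e^{u}-1)}{e^{ru}-1}-1\right)\cdot(-1)\cdot(-1),
\]
so it suffices to show $r(e^{u}-1)<e^{ru}-1$ for $u>0$ and $r>1$. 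Set $\psi(u)=e^{ru}-1-r(e^{u}-1)$. Then $\psi(0)=0$ and $\psi'(u)=r(e^{ru}-e^{u})>0$ for $u>0$ because $r>1$, so $\psi>0$ on $(0,\infty)$. Hence $N<0$ and $\tilde H_r$ is strictly decreasing.

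The only delicate step is this sign reduction, and the rescaling by $e^{(r+1)u}$ is designed to make it a clean convexity-type comparison. An alternative route is to write $\tilde H_r(u)=\int_0^r e^{-us}\,ds\,\big/\int_0^1 e^{-us}\,ds$ and use log-convexity, but the elementary $\psi$ argument suffices.

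\emph{Conclusion.} $\tilde H_r$ is continuous and strictly decreasing on $(0,\infty)$, with limits $r$ at $0^+$ and $1$ at $\infty$. Its range is therefore exactly the open interval $(1,r)$, and injectivity follows from strict monotonicity, so $\tilde H_r$ is a bijection from $(0,\infty)$ onto $(1,r)$.
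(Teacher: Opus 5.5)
Your proof is correct and follows the paper's route. The paper multiplies the derivative numerator by $e^{u}$ to get $f(u)=re^{-(r-1)u}-(r-1)e^{-ru}-1$ with $f(0)=0$ and $f'<0$. Your multiplier $e^{(r+1)u}$ gives $-\psi=e^{ru}f$, so the two sign arguments agree up to a positive factor; the factorization with $(-1)\cdot(-1)$ is unnecessary, since $re^{u}-r-e^{ru}+1=-\psi(u)$ directly, and your continuity/intermediate-value argument for the bijection onto $(1,r)$ spells out what the paper leaves as ``hence''.
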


\begin{proof}
Writing $\tilde H_r' = N(u)/(1-e^{-u})^2$ and multiplying the numerator by $e^u>0$ gives
$f(u) := r e^{-(r-1)u} - (r-1)e^{-ru} - 1$, with $f(0)=0$ and
$f'(u) = r(r-1)\left[e^{-ru}-e^{-(r-1)u}\right]<0$ for $u>0$, $r>1$. Hence $f(u)<0$, so $N(u)<0$
and $\tilde H_r'(u)<0$. The limits follow by L'H\^opital's rule.
\end{proof}

$\tilde H_r(u)$ is the ratio of the response produced by practice of intensity $ru$ to that
produced by intensity $u$. In the weak-practice limit it equals $r$ --- the response is locally
linear --- while under saturation it tends to $1$. Strict monotonicity means this ratio is a
\emph{signature} of the saturating nonlinearity: observing how the response scales between two
intensities determines the operating point $\alpha_1 p$ uniquely. This is exactly the information a
constant schedule cannot provide.

\subsection{The identifiability theorem}

\begin{theorem}[Structural identifiability, $K=2$, constructive]\label{thm:ident}
Under conditions (R), (S1) and (S2), the model \eqref{eq:K2} is structurally identifiable, and
every component of $\theta$ admits a closed-form expression in the observed trajectory and
schedule.
\end{theorem}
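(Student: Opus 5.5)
The proof peels off the parameters regime by regime, using that $\bm{k}(\cdot)$ and $\bm{p}(\cdot)$ are observed exactly and, by Lemma~\ref{lem:box}, $k_i\in(0,1)$ throughout, so that $\dot k_i$, $k_i$ and $1-k_i$ may be divided by freely.

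\textbf{Step 1 (rest).} On $I_R$ system \eqref{eq:K2} reduces to $\dot k_i=-\Lambda_i k_i$. Hence
\[
\Lambda_i=-\dot k_i(t)/k_i(t)=-\tfrac{d}{dt}\log k_i(t)
\]
for any $t\in I_R$, or in integrated form
\[
\Lambda_i=\log\bigl(k_i(t_a)/k_i(t_b)\bigr)/(t_b-t_a).
\]
This gives $\Lambda_1,\Lambda_2$ in closed form.

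\textbf{Step 2 (isolated practice of concept 1).} On $I_1$ we have $\phi_2(0)=0$. With $\Lambda_i$ now known, define the observable
\[
g_i(t):=\frac{\dot k_i+\Lambda_i k_i}{1-k_i}=A_{i1}\bigl(1-e^{-\alpha_1 p_1(t)}\bigr),\qquad i=1,2.
\]
By (S1) there are $t_a,t_b\in I_1$ with $p_1(t_b)=r\,p_1(t_a)$ for some $r>1$. Set $u=\alpha_1 p_1(t_a)$. The ratio
\[
g_1(t_b)/g_1(t_a)=\tilde H_r(u)
\]
then eliminates $A_{11}$. Lemma~\ref{lem:H} gives a unique $u=\tilde H_r^{-1}(g_1(t_b)/g_1(t_a))$, hence $\alpha_1=u/p_1(t_a)$. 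Afterwards $A_{11}=g_1(t_a)/(1-e^{-u})$ and $A_{21}=g_2(t_a)/(1-e^{-u})$. Because $\tilde H_r$ is a bijection onto $(1,r)$, the observed ratio necessarily lies in that range, so the inversion is well posed. (S2) yields $\alpha_2,A_{22},A_{12}$ symmetrically.

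A side issue is the degenerate case $A_{11}=0$, where $g_1\equiv 0$ and the ratio is undefined. In that case I would use $g_2$ instead, which works if $A_{21}>0$. If a whole column of $\bm{A}$ vanishes, $\alpha_1$ is genuinely unidentifiable. I would handle this by reading $\Theta=\mathbb{R}_+^8$ as strictly positive entries, and add a remark that the construction needs only one nonzero entry in each column.

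\textbf{Step 3 (identifiability).} If $\theta,\theta'$ are indistinguishable, they produce identical trajectories under a schedule satisfying (R), (S1), (S2). Such a schedule is admissible, and the closed-form maps of Steps 1--2 send the same data to the same values, so $\theta=\theta'$.

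The main obstacle is the regularity needed to evaluate $\dot k_i$ pointwise and to choose $t_a,t_b$ with an exact ratio $r$. If $p_1$ is only piecewise constant, $\dot k$ is discontinuous. I would work with one-sided derivatives at continuity points of $p_1$, or use the integrated form on subintervals where $p_1$ is constant, as in the proof of Lemma~\ref{lem:box}. Non-constancy supplies two distinct values $p_1(t_a)<p_1(t_b)$, and no further genericity condition is needed, since any $r>1$ is allowed in Lemma~\ref{lem:H}.
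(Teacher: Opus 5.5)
Your proposal is correct and follows the paper's proof essentially step for step. You obtain $\Lambda_i$ from the rest interval, then use the observable $g_1$ and the inversion of $\tilde H_r$ via Lemma~\ref{lem:H} on $I_1$ to get $\alpha_1$ and $A_{11}$, read off $A_{21}$ from the second component, and repeat symmetrically on $I_2$. Your remarks on the degenerate case $A_{11}=0$ (which forces reading $\Theta$ as strictly positive, or at least one nonzero entry per column of $\bm{A}$) and on evaluating $\dot k_i$ pointwise under piecewise-constant input are refinements the paper leaves implicit, not departures from its argument.
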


\begin{proof}
Let $\theta,\theta'$ be indistinguishable; we exhibit $\theta$ explicitly.

\emph{Step 1 (dissipation, from R).} On $I_R$ both responses vanish and \eqref{eq:K2} reduces to
$\dot k_i = -\Lambda_i k_i$, so $k_i(t)=k_i(t_R^-)e^{-\Lambda_i(t-t_R^-)}$ with $k_i(t_R^-)>0$ by
Lemma~\ref{lem:box}. Hence for any $t\in(t_R^-,t_R^+]$,
\begin{equation}
\Lambda_i = -\frac{1}{t-t_R^{-}}\log\frac{k_i(t)}{k_i(t_R^{-})}, \qquad i=1,2 .
\label{eq:lam}
\end{equation}

\emph{Step 2 (self-transfer and gain, from S1).} On $I_1$, $\phi_2=0$. With $\Lambda_1$ known,
define the observable
\begin{equation}
g_1(t) := \frac{\dot k_1(t) + \Lambda_1 k_1(t)}{1-k_1(t)} = A_{11}\left(1-e^{-\alpha_1 p_1(t)}\right),
\qquad t\in I_1 .
\label{eq:g1}
\end{equation}
Because $p_1$ is non-constant there exist $t_a,t_b\in I_1$ with $0<p_a:=p_1(t_a)<p_b:=p_1(t_b)$;
put $r:=p_b/p_a>1$. Taking the ratio of \eqref{eq:g1} at these instants eliminates $A_{11}$:
\begin{equation}
\frac{g_1(t_b)}{g_1(t_a)} = \frac{1-e^{-\alpha_1 p_b}}{1-e^{-\alpha_1 p_a}}
= \tilde H_r(\alpha_1 p_a).
\label{eq:ratio}
\end{equation}
The left side is observed and lies in $(1,r)$, so Lemma~\ref{lem:H} inverts \eqref{eq:ratio}
uniquely, giving $\alpha_1$ and then
$A_{11}=g_1(t_a)/\left(1-e^{-\alpha_1 p_a}\right)$.

\emph{Step 3 (cross-transfer, from S1).} On the same interval,
$\dot k_2 = A_{21}\left(1-e^{-\alpha_1 p_1(t)}\right)(1-k_2)-\Lambda_2 k_2$, whence for any
$t\in I_1$ with $p_1(t)>0$,
\begin{equation}
A_{21} = \frac{\dot k_2(t) + \Lambda_2 k_2(t)}
{\left(1-k_2(t)\right)\left(1-e^{-\alpha_1 p_1(t)}\right)} .
\label{eq:A21}
\end{equation}

\emph{Step 4 (symmetry).} Applying Steps 2--3 to $I_2$ yields $\alpha_2, A_{22}$, then $A_{12}$
(Appendix~\ref{app:thm}). All eight parameters are expressed by explicit formulas, so
$\theta=\theta'$.
\end{proof}

\subsection{Consequences}\label{sec:ident:cons}

\paragraph{Constructivity} The proof is an estimator, not an existence argument:
\eqref{eq:lam}--\eqref{eq:A21} supply a closed-form initialisation for the optimisation of
\eqref{eq:loss} and a diagnostic --- a large discrepancy between formula-based and
optimisation-based estimates signals noise or a violation of (R), (S1), (S2). A minimal
informative design for $K=2$ requires seven well-placed observation instants
(Appendix~\ref{app:thm}).

\paragraph{Robustness} In practice $\bm{k}$ is observed through \eqref{eq:emission} and the
derivatives in \eqref{eq:g1}, \eqref{eq:A21} must be estimated from smoothed trajectories.

\begin{lemma}[Robustness]\label{lem:robust}
Under the hypotheses of Theorem~\ref{thm:ident}, with $k_{\min}>0$, $u_{\min}>0$ and
$\alpha_j p$ confined to a compact subset of $(0,\infty)$, there is a constant $C^{\dagger}$
independent of the perturbation magnitudes such that
$\left\|\hat\theta-\theta^{*}\right\|_{\infty} \le C^{\dagger}\left(\epsilon+\delta\right)$,
where $\epsilon$ and $\delta$ bound the errors in $\bm{k}$ and $\dot{\bm{k}}$ respectively.
\end{lemma}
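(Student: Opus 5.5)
The plan is to treat the constructive proof of Theorem~\ref{thm:ident} as a finite composition of explicit maps from the observed data $(\bm{k}(t),\dot{\bm{k}}(t))$ at the finitely many instants used in Steps 1--4. The goal is to show that each map is Lipschitz on a compact neighbourhood of the true data, so that the errors propagate linearly. Concretely, the estimator is $\hat\theta=\Psi(\hat{\bm{k}},\hat{\dot{\bm{k}}})$, where $\Psi$ is built from \eqref{eq:lam}, \eqref{eq:g1}, \eqref{eq:ratio}, the inversion of $\tilde H_r$, and \eqref{eq:A21}. We then bound $\|\Psi(\hat x)-\Psi(x^*)\|_\infty\le \mathrm{Lip}(\Psi)\,\|\hat x-x^*\|$. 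Since $\hat x-x^*$ has components bounded by $\epsilon$ (state) and $\delta$ (derivative), the constant $C^\dagger$ is $\mathrm{Lip}(\Psi)$ times a fixed combinatorial factor.

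The steps, in order, are as follows.

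\textbf{(i) Dissipation.} Equation \eqref{eq:lam} depends on $\log k_i$. On $\{k\ge k_{\min}\}$ the logarithm is $1/k_{\min}$-Lipschitz, so $|\hat\Lambda_i-\Lambda_i|\le 2\epsilon/(k_{\min}(t-t_R^-))$.

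\textbf{(ii) Observable $g_1$.} The map $(k,\dot k,\Lambda)\mapsto(\dot k+\Lambda k)/(1-k)$ is smooth wherever $1-k$ is bounded below. I read $k_{\min}$ as bounding both $k$ and $1-k$ away from zero, which Lemma~\ref{lem:box} makes available on compact time sets. This gives $|\hat g_1-g_1|\le c(\epsilon+\delta)$. Here $u_{\min}$ serves as a lower bound on $g_1$ at the chosen instants, so the ratio $\rho=g_1(t_b)/g_1(t_a)$ is also Lipschitz in the data.

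\textbf{(iii) Gain $\alpha_1$.} This is obtained by inverting $\tilde H_r$. This step is the main obstacle: Lemma~\ref{lem:H} gives strict monotonicity but not a quantitative derivative bound, and $\tilde H_r'\to 0$ as $u\to\infty$ (saturation), so the inverse is not globally Lipschitz. Here the hypothesis that $\alpha_jp$ lies in a compact $[u_0,u_1]\subset(0,\infty)$ enters. On that set $\tilde H_r'$ is continuous and strictly negative, and in fact $f(u)<0$ from the proof of Lemma~\ref{lem:H}, so $|\tilde H_r'|\ge m>0$ by compactness. By the inverse function theorem, the inverse is $1/m$-Lipschitz on $\tilde H_r([u_0,u_1])$. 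A small technicality is that the perturbed ratio $\hat\rho$ may leave this image. I would handle it by projecting $\hat\rho$ onto the image interval, which is nonexpansive, or by restricting to $\epsilon+\delta$ below a threshold and absorbing the large-perturbation case into $C^\dagger$, since $\theta$ ranges over a bounded set.

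\textbf{(iv) Remaining parameters.} $A_{11}=g_1(t_a)/(1-e^{-\alpha_1p_a})$ and \eqref{eq:A21} are smooth in their arguments, with denominators bounded below by $1-e^{-u_0}$ and the bounds on $1-k$. They are therefore Lipschitz, and composing with the previous bounds gives linear estimates.

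\textbf{(v) Symmetric interval.} The same argument on $I_2$ (Step 4 of the theorem) yields $\alpha_2$, $A_{22}$ and $A_{12}$. Taking the maximum of the finitely many constants gives $C^\dagger$. It depends on $k_{\min}$, $u_{\min}$, $r$, the compact set, the interval lengths, and the chosen instants, but not on $\epsilon$ or $\delta$, as claimed.
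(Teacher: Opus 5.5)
Your proposal is correct and follows essentially the same route as the paper's proof in Appendix~\ref{app:robust}: you propagate perturbation bounds through each closed-form step of Theorem~\ref{thm:ident}, and the compactness hypothesis on $\alpha_j p$ gives $m_r=\min|\tilde H_r'|>0$, hence a Lipschitz inverse of $\tilde H_r$. A few minor points: in the paper $u_{\min}=\inf(1-k_i)$, and the lower bound on $g_1$ is a separate quantity $g_{\min}$, which does follow from the compactness hypothesis and $A_{11}>0$; the logarithm bound in (i) must allow for $\hat k<k_{\min}$, so the paper takes $\epsilon\le k_{\min}/2$ and gets $4\epsilon/(k_{\min}\Delta t)$; and the estimate is local in both versions, so your option of absorbing large perturbations into $C^{\dagger}$ would also require clipping $\hat{\bm k}$, because $\log\hat k_i$ and the divisions by $1-\hat k_i$ and $\hat g_1(t_a)$ are not bounded a priori.
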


The proof and the structure of $C^{\dagger}$ are given in Appendix~\ref{app:robust}. The
dependence is as instructive as the linearity: reconstruction degrades with short rest intervals,
with mastery near either boundary, and --- most relevantly for what follows --- as the ratio of
practice intensities approaches unity, where the inversion of $\tilde H_r$ becomes arbitrarily
ill-conditioned.

\paragraph{Scope} The result is structural: it concerns noise-free observation under an admissible
schedule. It does not address the stochastic term, practical identifiability under sparse discrete
observations, or misspecification of $\bm{Q}_{\mathrm{prior}}$. The extension to arbitrary $K$ ---
with conditions (R) and (Sj) for every $j$, and induction on $j$ --- is left as a conjecture;
Section~\ref{sec:results:synth} provides numerical evidence for $K=5$.

\paragraph{The reading that matters empirically} Conditions (R), (S1), (S2) describe a schedule an
instructor could impose but that no learner follows spontaneously. Sections~\ref{sec:setup} and
\ref{sec:results} examine what remains of parameter recovery when data are collected without such
design.

%% =====================================================================
\section{Numerical methods}\label{sec:numerics}
%% =====================================================================

\subsection{Discretisation of the forward problem}

Write \eqref{eq:main} in deterministic semilinear form
\begin{equation}
\dot{\bm{k}} = \bm{a}(t)\odot(\bm{1}-\bm{k}) - \bm{\Lambda}\odot\bm{k},
\qquad \bm{a}(t) := \bm{A}\,\bm{\phi}(\bm{p}(t)) \ge \bm{0} .
\label{eq:semilinear}
\end{equation}
The system is stiff in a structural sense: acquisition acts only while practice is applied, whereas
dissipation acts always, with rates that may differ across concepts by an order of magnitude.

Because the input is piecewise constant between interactions, we integrate on the grid of
interaction instants with fixed step $h$. The experiments in Section~\ref{sec:results} use the
classical explicit fourth-order Runge--Kutta scheme with $h=0.25$ in units of the interaction grid;
the step was selected by refinement.

For deployment on long unpractised intervals, where an explicit scheme requires
$h \lesssim 2/\max_j\Lambda_j$, we also derive an implicit scheme. Since for fixed input the equation is linear in $\bm{k}$, the implicit Euler method can be applied to the whole right-hand side,
\begin{equation}
\bm{k}_{n+1} = \bigl(\bm{1}+h\,(\bm{a}(t_n)+\bm{\Lambda})\bigr)^{-1}\odot\left[\bm{k}_{n} + h\,\bm{a}(t_n)\right],
\label{eq:semiimplicit}
\end{equation}
explicit to evaluate since the matrix to invert is diagonal.

\begin{proposition}[L-stability on the dissipative subsystem]\label{prop:lstable}
Applied to $\dot k = -\Lambda k$, $\Lambda>0$, scheme \eqref{eq:semiimplicit} has amplification
factor $R(z)=(1+z)^{-1}$ with $z=h\Lambda$; hence $|R(z)|<1$ for all $h>0$ and $R(z)\to0$ as
$z\to\infty$, so the scheme is A-stable and L-stable in the sense of \cite{hairer1996solving}. It preserves the invariant box of Lemma~\ref{lem:box} for every step $h>0$.
\end{proposition}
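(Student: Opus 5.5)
The plan is to specialise scheme \eqref{eq:semiimplicit} to the dissipative subsystem, read off its amplification factor, check stability for every step, and then verify box invariance for the general scheme with input.

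\emph{Amplification factor.} Setting $\bm{a}\equiv\bm{0}$ and $K=1$ in \eqref{eq:semiimplicit} gives
\[
k_{n+1}=(1+h\Lambda)^{-1}k_n,
\]
so $R(z)=(1+z)^{-1}$ with $z=h\Lambda$. This is the implicit Euler stability function. For $\Lambda>0$ and $h>0$ we have $z>0$, hence $0<R(z)<1$. Moreover $R(z)\to0$ as $z\to\infty$.

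\emph{A- and L-stability.} These are properties of $R$ on the whole left half-plane in the convention of \cite{hairer1996solving}, where the test equation is $\dot y=\lambda y$ and $z=h\lambda$. Writing $w=-z$, so that $\operatorname{Re}w\le0$, the factor is $(1-w)^{-1}$. Since $|1-w|\ge 1-\operatorname{Re}w\ge 1$, we get $|R|\le1$ throughout. Together with $R\to0$ at infinity, this is exactly A- plus L-stability. I will state the result in the paper's sign convention and note the translation, so the claim is not read as merely concerning the positive real axis.

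\emph{Box invariance.} Here the input must be kept, so I work componentwise with $a:=a_i(t_n)\ge0$. The update is
\[
k_{n+1}=\frac{k_n+ha}{1+ha+h\Lambda}
=\frac{1}{1+ha+h\Lambda}\,k_n+\frac{ha+h\Lambda}{1+ha+h\Lambda}\,k^\ast,
\qquad k^\ast=\frac{a}{a+\Lambda}\in[0,1).
\]
This is a convex combination of $k_n$ and $k^\ast$, and the weight $1/(1+ha+h\Lambda)$ on $k_n$ is strictly positive. If $k_n\in(0,1)$, then $k_{n+1}\in(0,1)$: positivity comes from the strictly positive weight on $k_n>0$, and the upper bound from $k_n<1$ and $k^\ast<1$. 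Induction on $n$ then gives invariance for every $h>0$. This mirrors the argument of Lemma~\ref{lem:box}, with the weight $e^{-s(t-t_0)}$ replaced by $(1+hs)^{-1}$.

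\emph{Main obstacle.} The step most likely to cause trouble is the careful statement of L-stability, rather than any computation. The proposition only mentions real $z>0$, while the formal definitions involve the complex half-plane and the opposite sign. The fix is a one-line remark identifying $R$ with the implicit Euler stability function $1/(1-h\lambda)$.
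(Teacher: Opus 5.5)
Your proposal is correct and follows essentially the paper's route: you set $\bm{a}=\bm{0}$ to read off $R(z)=(1+z)^{-1}$ and then bound the update componentwise, and your convex-combination form is a repackaging of the paper's estimate $0\le k_{n,i}+ha_i<1+h(a_i+\Lambda_i)$. Your version is slightly more complete in two places the paper leaves implicit: you verify $|R|\le 1$ on the whole closed left half-plane in the Hairer--Wanner sign convention, which A-stability formally requires beyond the positive real $z$ the paper checks, and the strictly positive weight on $k_n$ gives $k_{n+1}>0$, so the open box $(0,1)$ of Lemma~\ref{lem:box} is preserved rather than only $[0,1)$.
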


\begin{proof}
With $\bm{a}=0$ the scheme reduces to $k_{n+1}=k_n/(1+h\Lambda)$, which gives $R(z)=(1+z)^{-1}$. For the box, the numerator $k_{n,i}+ha_i$ is non-negative, so $k_{n+1,i}\ge0$, and $k_{n,i}+ha_i\le1+ha_i<1+h(a_i+\Lambda_i)$, so $k_{n+1,i}<1$. No restriction on the step is required.
\end{proof}

L-stability matters because an explicit scheme with too large a step produces oscillatory or
negative mastery after long gaps --- outside the state space, not merely inaccurate.

\subsubsection*{Comparison of the schemes on stiff regimes}

To delimit the applicability of each scheme quantitatively, consider a model problem with two
concepts whose forgetting rates differ by a factor of forty,
$\bm{\Lambda} = (0.05,\ 2.00)\ \mathrm{day}^{-1}$. On an interval without practice the solution is
known analytically, $\bm{k}(T) = \bm{k}_0 \odot e^{-\bm{\Lambda} T}$, so the error can be measured
directly.

\begin{table}[h]
\centering
\caption{Absolute error after a gap of $T = 30$ days. ``escape'' indicates that the solution left
the invariant box $[0,1]$.}
\begin{tabular}{ccrcrc}
\toprule
$h$, days & $h\Lambda_{\max}$ & \multicolumn{2}{c}{explicit RK4} & \multicolumn{2}{c}{scheme~\eqref{eq:semiimplicit}} \\
\cmidrule(lr){3-4}\cmidrule(lr){5-6}
 & & error & box & error & box \\
\midrule
0.25 & 0.50 & $4.1\cdot10^{-11}$ & ok & $1.3\cdot10^{-3}$ & ok \\
1.00 & 2.00 & $1.1\cdot10^{-8}$  & ok & $4.9\cdot10^{-3}$ & ok \\
2.00 & 4.00 & $1.7\cdot10^{10}$  & escape & $9.8\cdot10^{-3}$ & ok \\
5.00 & 10.0 & $3.3\cdot10^{14}$  & escape & $2.3\cdot10^{-2}$ & ok \\
30.0 & 60.0 & $2.8\cdot10^{5}$   & escape & $1.1\cdot10^{-1}$ & ok \\
\bottomrule
\end{tabular}
\label{tab:schemes}
\end{table}

Table~\ref{tab:schemes} shows that the explicit scheme retains high accuracy up to its stability
limit, which for the fourth-order Runge--Kutta method on the negative real axis is
$|h\Lambda| \le 2.785$. Beyond it the solution diverges and leaves the invariant box, producing
negative mastery; with $\Lambda_{\max} = 2\ \mathrm{day}^{-1}$ this already occurs at
$h \ge 1.4$ days. By Proposition~\ref{prop:lstable}, scheme~\eqref{eq:semiimplicit} remains stable
at any step and stayed inside the box at all steps tested, with an error that grows smoothly and
does not exceed $1.1\cdot10^{-1}$ even when the step equals the whole thirty-day gap.

A practical division of roles follows. For parameter estimation from data, where the step is set by
the interaction grid and is small, the explicit scheme is preferable and is used in all experiments
of Section~\ref{sec:results}. For long horizons and for on-device fine-tuning, where the step is
dictated by computational resources rather than by accuracy, scheme~\eqref{eq:semiimplicit} is
required. We stress that its advantage is not accuracy --- at small steps the fourth-order explicit
method is more accurate by many orders of magnitude --- but unconditional stability and preservation
of the state space.

\subsection{The inverse problem: adjoint sensitivity}

Storing the full computational graph costs $O(NK)$ memory for a trajectory of $N$ steps. The
adjoint formulation solves the augmented system backwards in time; with checkpointing at
$O(\sqrt{N})$ nodes \cite{griewank2008evaluating} the cost becomes memory $O(\sqrt{N}K)$ and
compute $O(N\sqrt{N}K^{2})$, the regime required for parameter updates on a learner's own device.
The experiments here use direct backpropagation, for which the batched formulation already removes
the bottleneck.

\subsection{A batched reformulation with provable equivalence}\label{sec:numerics:batched}

The natural implementation integrates one learner at a time. This is the dominant cost: on a cohort
of $2.5\times10^{4}$ learners a single epoch took approximately $6.2$ hours on a modern GPU with
device utilisation below 20\% --- the computation is latency-bound in the per-trajectory loop.

When time is measured on the interaction grid, all learners share a common integer time axis; a
batch of $B$ learners can therefore be advanced by a single ODE solve on the union grid, with the
state held as a $(B,K)$ tensor and each learner's mastery read at that learner's own interaction
indices, padding masked out.

\begin{proposition}[Numerical equivalence]\label{prop:equiv}
For fixed parameters, the batched and per-trajectory formulations produce identical predicted
probabilities and identical gradients of \eqref{eq:loss}, up to floating-point round-off.
\end{proposition}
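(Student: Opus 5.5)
The plan is to show that, for fixed parameters, the batched formulation performs exactly the same scalar operations as the per-trajectory one, learner by learner, and that the masking only adds exact zeros. Since exact-arithmetic equality is almost immediate, the substance lies in pinning down the conditions under which the floating-point results agree, bit-exactly for predictions and to round-off for gradients.

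\emph{Step 1 (decoupling across learners).} By \eqref{eq:semilinear}, the right-hand side for learner $s$ depends only on that learner's state $\bm{k}^{(s)}$ and input $\bm{p}^{(s)}(t)$, together with the shared parameters $\theta$. Every operation in one RK4 stage (evaluating $\bm{\phi}$, multiplying by $\bm{A}$, Hadamard products, axpy updates) is therefore row-wise on the $(B,K)$ state tensor. Row $s$ of the batched stage is the same sequence of operations applied to the same operands as the per-trajectory stage, provided two things hold. First, on the shared integer grid the step $h$ and the stage times coincide for all learners. Second, the extra grid points of the union grid, where learner $s$ has no interaction, carry the input value that the per-trajectory solver would use there. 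Under the piecewise-constant input convention, this is the held value or zero; the convention must be the same in both implementations.

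\emph{Step 2 (predictions).} Each learner's mastery is read off at its own interaction indices, which are grid nodes in both formulations. By induction over steps, the row-$s$ iterate of the batched solve equals the per-trajectory iterate at every node. The emission \eqref{eq:emission} is then evaluated elementwise on identical inputs, so the predicted probabilities agree.

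\emph{Step 3 (gradients).} The loss \eqref{eq:loss} is a sum over learners and observations, normalised by the common constant $M$. Padded entries enter with mask weight zero, so they contribute exact zeros to both the value and the gradient. With $\mathcal{R}=0$ and zero residual weight, no cross-learner term appears. Backpropagation through a row-separable computation gives, for each $\theta$-component, the sum over $s$ of the per-learner gradients. The per-learner factors are identical by Step 1. The only difference is that the batched implementation reduces over the batch dimension in a different order than accumulating gradients learner by learner. Floating-point addition is not associative, so this reduction is the sole source of discrepancy. It is bounded by the standard $O(B\,u)$ relative summation error, with $u$ the unit round-off, which accounts for gradient agreement at the $10^{-10}$ level rather than bit-exactly.

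\emph{Main obstacle.} The step I expect to need the most care is the implicit assumption in Step 1 that the batched kernels really compute row-wise with the same scalar operation order as the per-trajectory kernels. The matrix product $\bm{A}\bm{\phi}$ executed as a batched GEMM may use a different summation order or fused multiply-adds than the single-vector case. If so, predictions would agree only to round-off and not bit-exactly. I would handle this by stating the proposition "up to floating-point round-off" as written, and by noting that bit-exactness additionally requires deterministic, row-independent kernels for the $K$-dimensional contraction. Beyond that, the union-grid input convention must be verified to coincide with the per-trajectory one.
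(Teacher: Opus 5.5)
Your proposal is correct and follows essentially the paper's own proof, which has three steps. First, the batched vector field is row-wise identical to the per-trajectory one: row $b$ of $\bm{\Phi}\bm{A}^{\top}$ equals $\bm{A}\bm{\phi}^{(b)}$. Second, the fixed-step integrator agrees on the common integer axis; the paper phrases this as locality (padding beyond a learner's last node cannot affect earlier nodes), which your forward induction gives implicitly. Third, the masked objective is the same because observations are gathered at grid nodes, and summation order is the only difference. Where the paper obtains equal gradients from the fact that both paths compute the same function of the parameters, you argue through the row-separable computational graph and add the caveat that bit-exact predictions need row-independent contraction kernels; this is slightly more explicit about floating-point behaviour but is not a different route.
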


The proof is given in Appendix~\ref{app:batched}. We verify it empirically: on a synthetic cohort
in the exact data format of the loader, predicted probabilities agree bit-exactly
($\max|\Delta p| = 0$), gradients with respect to the transfer parameters agree to
$1.5\times10^{-10}$, and the full objective and its gradient agree to $|\Delta\mathcal{L}|=0$ and
$8.3\times10^{-10}$. The practical effect is a reduction of epoch time from $6.2$ hours to
$16$ seconds on the same cohort and hardware --- a factor of approximately $245$ --- which is what
makes the five-seed study and the four control experiments below feasible.

\subsection{Non-uniform time grids}\label{sec:numerics:time}

Section~\ref{sec:results:controls} reports a variant in which pseudo-time is replaced by real
elapsed time. This breaks the common integer axis: with continuous timestamps the union grid of a
batch is almost fully distinct, and consecutive nodes may be separated by intervals below
$10^{-5}$ while others span months. We therefore reparameterise monotonically and quantise: node
times are set to $\operatorname{cumsum}\left(\log(1+\Delta t)\right)$ with $\Delta t$ in hours,
then rounded onto a grid of resolution $0.25$ with strict monotonicity enforced. Without this
treatment the integration diverges: we observed non-finite objective values and chance-level
predictions throughout training.

%% =====================================================================
\section{Experimental setup}\label{sec:setup}
%% =====================================================================

\subsection{Data}

We evaluate on synthetic cohorts and four public benchmarks: ASSISTments-2009/2015/2017 and
Junyi-2015 \cite{chang2015junyi} ($K=39$ topic-level knowledge components). ASSISTments-2009
contains duplicated rows that inflate AUC for every model \cite{xiong2016going}; we de-duplicate
before all runs. All models share student-level folds and identical (student, step) keys;
significance is assessed with the DeLong test \cite{delong1988comparing} and a student-clustered
bootstrap, with Holm correction \cite{holm1979simple}. For Junyi we use a fixed 25\,000-student
subsample (seed 42).

\paragraph{Practice input, difficulty and rank} For every interaction, the practised concept receives
a rectangular pulse of fixed amplitude centred on the interaction time on the interaction grid.
Practice intensity therefore takes essentially a single positive value. By the necessity result in
\cite{kostanian2026exact}, the response gains $\alpha$ are then not identifiable from such data,
whatever their volume; only the products $A\,\mathrm{diag}(\phi(\bar p))$ are. Topic difficulty is
the per-topic error rate computed from the same logs. The correlation reported in
Section~\ref{sec:results:apparent} is therefore not an external validation; its strength is close to
an algebraic consequence of the stationary relation \eqref{eq:steady}. The factor $\bm{B}$ has rank $r=25$. Training uses fold~0 (17\,500 / 2\,500 / 5\,000 learners for training, validation and test) and training seeds 0--4; seed~42 refers only to the subsampling. Time is measured on the interaction grid unless stated otherwise, so that
$\Lambda$ is a rate per interaction rather than per day in these experiments; forgetting
interpreted in calendar time is examined only in the real-time variant (control C3).

\subsection{Protocol for parameter validation}\label{sec:setup:protocol}

The model is trained without an expert prior ($\bm{Q}_{\mathrm{prior}}=\bm{1}$). Unless stated
otherwise we use the batched solver, zero residual and regularisation weights, $h=0.25$, and a
fixed budget of 25 epochs without early stopping, so that no run is advantaged by a lucky stopping
point.

We do not treat a single correlation as evidence. Every parameter-recovery claim is subjected to
four controls:
\begin{itemize}
\item \textbf{C1 (shuffle).} Interaction order is permuted within each learner, destroying temporal
structure while preserving per-topic frequencies exactly.
\item \textbf{C2 (baseline).} The same external correlation is computed for a classical BKT model.
\item \textbf{C3 (real time).} Pseudo-time is replaced by real elapsed time and the recovered rates
are correlated with an empirical per-topic forgetting slope measured directly from the data.
\item \textbf{C4 (time ablation).} The same run with real time switched off.
\end{itemize}

%% =====================================================================
\section{Results}\label{sec:results}
%% =====================================================================

\subsection{Synthetic data: the theory holds under excitation}\label{sec:results:synth}

On synthetic cohorts generated under the excitation schedule of Theorem~\ref{thm:ident}, the
closed-form reconstruction recovers parameters to a relative error of $10^{-15}$ (machine
precision) for $K=2$. For $K=5$, end-to-end training recovers the prerequisite graph exactly
($F_1=1.0$) and attains AUC $=0.717$ against an oracle ceiling of $0.720$.

\subsection{Predictive accuracy: a quantified trade-off}

\begin{table}[h]
\centering
\caption{Matched AUC on contamination-controlled benchmarks. All pairwise differences are
significant (DeLong, $p\approx0$).}
\begin{tabular}{lccc}
\toprule
Dataset & Cognitive-PINN & DKT & BKT \\
\midrule
ASSISTments-2009 & 0.682 & 0.758 & 0.717 \\
ASSISTments-2015 & 0.695 & 0.730 & 0.691 \\
ASSISTments-2017 & 0.607 & 0.697 & 0.627 \\
\bottomrule
\end{tabular}
\label{tab:auc}
\end{table}

Cognitive-PINN is below DKT on all three data sets and below BKT on two of
three datasets. This is the expected price of constraining the model to a small set of
mechanistically meaningful parameters; predictive accuracy is not a contribution of this work.

A natural objection follows: what is the point of a model that is less accurate than a design three
decades old. The answer is that the models answer different questions. BKT and DKT predict the next
response, but neither admits the question of identifiability: in BKT the parameters are attached to
individual skills and describe no transfer between them, while the latent state of DKT has no
mechanistic reading with respect to which such a question would be meaningful. At the cost of
accuracy, the proposed model acquires a property absent from both: one can \emph{prove} under which
observation conditions its parameters are uniquely determined and, consequently, use it as an
instrument for testing whether those conditions hold in the data at hand. That test, rather than
predictive accuracy, is the subject of Sections~\ref{sec:results:apparent}--\ref{sec:results:why}.

\subsection{Observational data: an apparent recovery}\label{sec:results:apparent}

Trained on Junyi without an expert prior, the recovered forgetting rates $\bm{\Lambda}$ correlate
strongly with independently measured topic difficulty: Spearman $\rho = 0.83 \pm 0.01$ across five
seeds, with partial $\rho = 0.78 \pm 0.01$ after controlling for the number of observations per
topic. Read in isolation, this is the kind of evidence commonly offered as demonstrating that a
model's parameters are meaningful.

%% --- Fig.1: uncomment after running make_figures.py ---
\begin{figure}[h]
\centering
\includegraphics[width=0.72\textwidth]{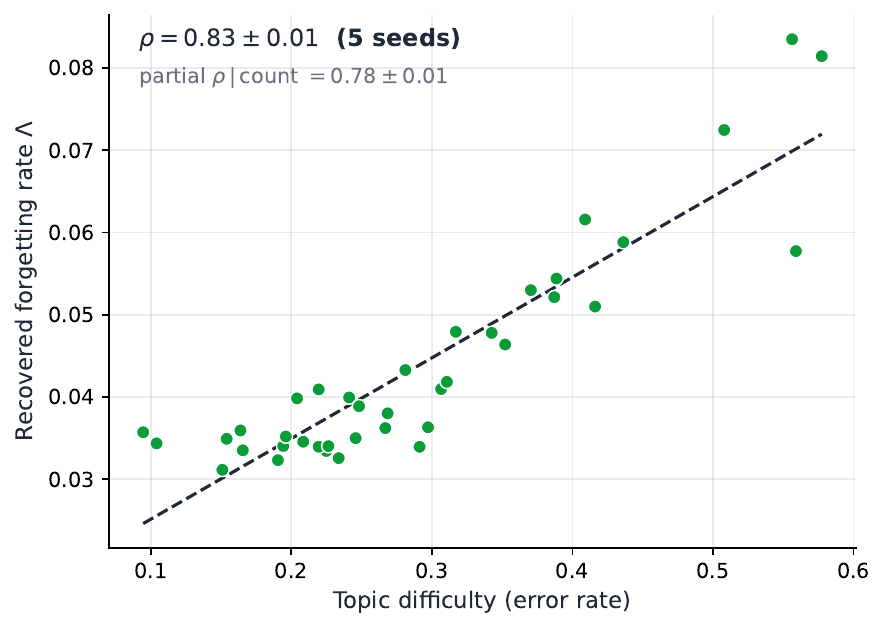}
\caption{Recovered forgetting rates versus independently measured topic difficulty on Junyi
($K=39$), across five seeds. The correlation is strong and stable --- and, as
Section~\ref{sec:results:controls} shows, not evidence of recovering forgetting.}
\label{fig:lambda}
\end{figure}

\subsection{Four controls refute the recovery interpretation}\label{sec:results:controls}

\begin{table}[h]
\centering
\caption{Controls on the apparent parameter recovery. Single values refer to one training run; the five-seed mean of the unshuffled correlation is $0.834\pm0.006$, with a 95\% percentile bootstrap interval over topics of approximately $[0.63,0.93]$. Spreads across seeds do not reflect sampling uncertainty across the 39 topics; the controls are therefore read as showing that the correlation is not destroyed, not as estimating differences between conditions.}
\begin{tabular}{llc}
\toprule
Control & Quantity & Result \\
\midrule
C1 shuffle & $\rho(\bm{\Lambda},\text{difficulty})$ after permuting order & $0.829$ (vs $0.845$) \\
C2 baseline & $\rho(\text{BKT slip},\text{difficulty})$ & $0.795$ \\
            & $\rho(\text{BKT guess},\text{difficulty})$ & $-0.699$ \\
C3 real time & $\rho(\bm{\Lambda},\text{empirical forgetting slope})$ & $0.49\pm0.01$ \\
C4 time ablation & same, real time removed & $0.504$ \\
\bottomrule
\end{tabular}
\label{tab:controls}
\end{table}

C1 shows the correlation is insensitive to temporal ordering, i.e.\ carried by per-topic answer
frequencies rather than by learning dynamics. C2 shows a classical per-skill BKT reproduces
essentially the same correlation, so it is not evidence about our model at all. C3 shows that when
the reference is switched from difficulty to an empirically measured forgetting slope, the
correlation drops to a moderate $0.49$ while the difficulty correlation remains at $0.79$. C4 is
decisive: removing real timestamps does not reduce the correlation ($0.504$ vs $0.49$), so the
timestamps contribute nothing.

Structural recovery is likewise weak and unstable: edge-AUC against the expert prerequisite graph
is $0.570 \pm 0.026$ over five seeds (range $0.534$--$0.609$), and the sparser reference graph
yields no significant signal.

%% --- Fig.2: uncomment after running make_figures.py ---
\begin{figure}[h]
\centering
\includegraphics[width=0.72\textwidth]{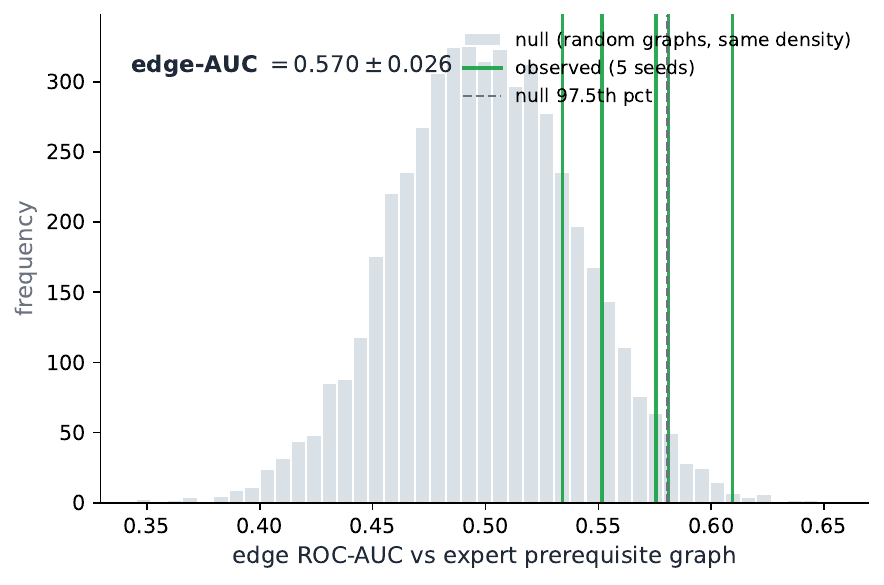}
\caption{Structural recovery across five seeds against the null distribution of random graphs of
identical density. The observed edge-AUC values lie close to the upper tail of the null, and the
spread across seeds exceeds the margin over chance.}
\label{fig:structure}
\end{figure}

\subsection{Why the recovery degenerates}\label{sec:results:why}

The behaviour follows from \eqref{eq:steady}: $\Lambda_j$ directly parameterises the predicted
response probability of a topic, so minimising the data term \emph{requires} $\Lambda_j$ to absorb
the observed per-topic error rate. The correlation with difficulty is a property of the stationary
structure, not a discovery about the learner --- which is what C2 demonstrates empirically.

A genuine forgetting signal does exist in the data: topic-controlled accuracy falls monotonically
from $0.751$ for gaps below one hour to $0.589$ for gaps beyond thirty days (within-topic Spearman
$\rho = -0.176$ across all 39 topics). But it is carried by the 2.4\% of interactions separated by
a day or more and is overwhelmed in the likelihood by the remaining 97\% of within-session
interactions. Real timestamps fail to help not because the signal is absent, but because it is not
identifiable from passive logs at the weight it carries in the objective --- a failure consistent with the analysis of Section~\ref{sec:ident} in the absence of designed excitation. Theorem~\ref{thm:ident} itself concerns noise-free continuous observation and does not by itself imply this failure.

%% =====================================================================

\subsection{Additional controls: ablation, mechanism isolation and a positive control}\label{sec:results:extra}

The analyses in this subsection use the estimator and configuration of Section~\ref{sec:results:apparent}
(batched solver, 25 epochs, learning rate $3\cdot10^{-3}$, rank 25, fold~0, training seeds 0--4);
the baseline runs reproduce the forgetting rates reported there exactly. Correlations are given as
mean $\pm$ SD over seeds; intervals are 95\% percentile bootstrap intervals over topics.

\paragraph{Ablation: a single forgetting rate} Replacing the topic-specific rates by one shared
scalar leaves accuracy unchanged (Table~\ref{tab:ablation}). Topic difficulty is carried by the
location parameter $\delta$ in both models ($\rho=0.945$, interval approximately $[0.86,0.97]$),
while the topic-specific rates, though reproducible across seeds (pairwise $\rho\ge0.995$), vary by
a factor of only $2.7$ across topics (0.031 to 0.085 per interaction step, median 0.039). The
topic-specific forgetting rates are therefore not needed for predictive accuracy, and their
correlation with difficulty accompanies rather than carries the difficulty signal. Because the
emission parameters are indexed by topic, a free per-topic intercept would coincide with $\delta$
and was not fitted separately.

\begin{table}[h]
\centering\small
\caption{Ablation on Junyi (five seeds): topic-specific versus shared forgetting rate.}
\label{tab:ablation}
\begin{tabular}{lcc}
\toprule
 & topic-specific $\Lambda$ & shared $\Lambda$ \\
\midrule
validation AUC & $0.6457\pm0.0003$ & $0.6457\pm0.0003$ \\
test AUC & $0.6553\pm0.0002$ & $0.6553\pm0.0002$ \\
$\rho(\hat\Lambda,\text{difficulty})$ & $0.834\pm0.006$ & --- ($\hat\Lambda=0.034$) \\
$\rho(\hat\delta,\text{difficulty})$ & $0.945\pm0.003$ & $0.945\pm0.004$ \\
$\rho(\hat\beta,\text{difficulty})$ & $-0.916\pm0.006$ & $-0.920\pm0.004$ \\
\bottomrule
\end{tabular}
\end{table}

\paragraph{Mechanism isolation: known rates on the real schedule} We retained the real schedule of all
25\,000 learners and replaced the responses by responses simulated from the model, with $A$,
$\alpha$ and $\beta$ taken from the fit of the same seed and $\delta$ recalibrated so that per-topic
error rates match the observed ones (absolute difference below $0.001$). In case (a) all topics share
the rate $\Lambda=0.05$; in case (b) the true rates are independent of difficulty
($\rho=0.000\pm0.006$). In both cases the fitted rates correlate with difficulty, and their range
nearly coincides with the range fitted to the real data (Table~\ref{tab:replay}). In case (b) the
fitted rates are only weakly related to the true ones: the partial correlation given difficulty is
$0.25\pm0.09$, and its interval includes zero for three of the five seeds. The correlation with
difficulty is therefore produced in large part by the schedule and the model structure, even when the
generating process contains no such relation. The simulated correlations ($0.69$--$0.72$) fall short
of the observed $0.83$; this remainder is not explained by the analysis.

\begin{table}[h]
\centering\small
\caption{Replay of the real Junyi schedule with known forgetting rates (five seeds).}
\resizebox{\textwidth}{!}{%
\label{tab:replay}
\begin{tabular}{lccc}
\toprule
 & (a) identical $\Lambda$ & (b) independent $\Lambda$ & real data \\
\midrule
$\rho(\hat\Lambda,\text{difficulty})$ & $0.72\pm0.04$ & $0.69\pm0.08$ & $0.83$ \\
$\rho(\hat\Lambda,\Lambda_{\mathrm{true}})$ & --- & $0.18\pm0.08$ & --- \\
partial $\rho(\hat\Lambda,\Lambda_{\mathrm{true}}\mid\text{difficulty})$ & --- & $0.25\pm0.09$ & --- \\
$\hat\Lambda$: min / median / max & 0.030 / 0.037 / 0.085 & 0.029 / 0.037 / 0.085 & 0.031 / 0.039 / 0.085 \\
\bottomrule
\end{tabular}}
\end{table}

\paragraph{Positive control} To test whether the controls of Section~\ref{sec:results:controls} can
recognise genuine recovery, we generated synthetic cohorts ($K=39$, $N=5000$) with designed
excitation: blocks of isolated practice at two pulse amplitudes (1 and 2.5) and rest intervals of at
least 20 grid steps with assessments at their start, middle and end; true rates were log-uniform on
$[0.01,0.2]$ per step. With the parameter values planned initially (diagonal transfer $\approx4$,
off-diagonal $0.5$, emission slopes $1$--$1.8$), the data carry almost no information about
$\Lambda$: oracle models with the true rates and with rates permuted across topics differ in AUC by
$0.0005$, and recovery is absent with and without shuffling ($\rho(\hat\Lambda,\Lambda_{\mathrm{true}})
=0.11\pm0.19$ and $0.10\pm0.20$). In this regime the controls are uninformative. We therefore selected,
by the oracle criterion alone and without fitting, values at which forgetting is observable (oracle
AUC difference above $0.01$; diagonal transfer $U[0.3,0.6]$, off-diagonal $0.05$, emission slopes
$2$--$3.5$, achieved difference $0.018$). In this regime recovery is present without shuffling and
disappears with it (Table~\ref{tab:positive}): the partial correlation of $\hat\Lambda$ with the true
rates given difficulty is $0.40\pm0.10$ against $0.03\pm0.11$, and the difference is positive in every
seed. The raw correlation is not informative here, because the simulated true rates are themselves
correlated with difficulty ($\rho=0.38$). BKT does not reproduce the true rates: its learning
probability correlates negatively with them ($-0.59\pm0.15$), and the other parameters are not
distinguishable from zero, while all four are strongly related to difficulty. The fitted rates are
strongly compressed ($0.044$--$0.073$ for true values $0.01$--$0.2$). The controls are thus specific
where forgetting is observable and uninformative where it is not; the regime in which they are
specific was selected for sensitivity and uses emission slopes steeper than those fitted to Junyi
($\approx1.4$).

\begin{table}[h]
\centering\small
\caption{Positive control under designed excitation with observable forgetting (five seeds).}
\label{tab:positive}
\begin{tabular}{lcc}
\toprule
 & original order & shuffled (C1) \\
\midrule
validation AUC & $0.654\pm0.011$ & $0.644\pm0.011$ \\
partial $\rho(\hat\Lambda,\Lambda_{\mathrm{true}}\mid\text{difficulty})$ & $0.40\pm0.10$ & $0.03\pm0.11$ \\
$\rho(\hat\Lambda,\text{difficulty})$ & $0.98$ & $0.99$ \\
\midrule
BKT (C2): $\rho(\cdot,\Lambda_{\mathrm{true}})$ / $\rho(\cdot,\text{difficulty})$ & & \\
\quad prior & \multicolumn{2}{c}{$-0.27\pm0.28$ / $-0.94$} \\
\quad learn & \multicolumn{2}{c}{$-0.59\pm0.15$ / $-0.85$} \\
\quad slip & \multicolumn{2}{c}{$0.29\pm0.06$ / $0.35$ (undefined in 2 of 5 seeds)} \\
\quad guess & \multicolumn{2}{c}{$0.14\pm0.18$ / $-0.50$} \\
\bottomrule
\end{tabular}
\end{table}

\section{Discussion}\label{sec:discussion}
%% =====================================================================

\paragraph{The failure is consistent with the robustness analysis} Lemma~\ref{lem:robust} bounds the
reconstruction error by $C^{\dagger}(\epsilon+\delta)$, but its constant degrades with short rest
intervals, with mastery near the boundaries of its range, and --- most relevantly --- as the ratio
of practice intensities approaches unity, where the inversion underlying recovery of the practice
gain becomes arbitrarily ill-conditioned. Observational logs exhibit exactly this combination: gaps
are overwhelmingly short (96.9\% below one hour) and practice intensity is close to constant. The
degeneration of Section~\ref{sec:results:controls} is therefore consistent with the regime that the robustness analysis identifies as ill-conditioned, although the analysis does not translate sparse binary responses into an effective noise level.

\paragraph{A sharp boundary, predicted by theory and observed in data}
Sections~\ref{sec:results:synth} and \ref{sec:results:controls} describe the same model on two
kinds of data. Where the excitation conditions hold, recovery is exact. Where they do not --- in
observational logs, however large --- recovery degenerates into a reparameterisation of answer
frequencies. The identifiability theorem is therefore not a technical footnote: it delineates a
qualitative boundary between two regimes of the same dynamical system, and the boundary is
empirically visible. For complex social systems modelled from passive observation, the practically
relevant statement is that scale of data does not substitute for design of observation.

\paragraph{A methodological consequence} Interpretable knowledge tracing routinely supports its
claims by correlating a learned quantity with an external property. Controls C1 and C2 show that
such a correlation can be strong ($\rho=0.83$), survive the destruction of all temporal
information, and be matched by a three-decade-old baseline. It is therefore not sufficient evidence
that a model has recovered the mechanism it names. We recommend that claims of parameter
interpretability be accompanied by, at minimum, a shuffle control and a simple-baseline control;
both are inexpensive and, in our case, decisive. This critique applies to our own initial reading
of the data, which the controls overturned.

\paragraph{On negative results and identifiability} The contribution is not that a particular model
failed to fit. It is that a model whose parameters are provably identifiable under stated
conditions can be used as an instrument to test whether those conditions are met in a given
dataset --- and that, for the standard benchmarks of this field, they are not. This reframes a
recurring difficulty in educational data mining --- unstable, implausible parameter estimates ---
as a question of experimental design rather than of model capacity, and connects it to the
established distinction between structural and practical identifiability
\cite{wieland2021identifiability}.

\paragraph{Relation to subsequent results} A companion analysis \citep{kostanian2026exact}
places the observations above on a formal footing. The model class admits an exact discretisation,
which yields identifiability from discrete observations for an arbitrary number of concepts; a
single practice intensity renders the response gains unidentifiable; and, under a logistic
observation model, observations in a stationary regime confound the forgetting rate with the item
location parameter. The last result also answers why the difficulty signal can be absorbed by
$\Lambda$ rather than by the item difficulties of the observation model: in the stationary regime the
two are exchangeable, and the division between them is determined by initialisation and regularisation rather than by the likelihood. The ablation of Section~\ref{sec:results:extra} is consistent with this: a single shared rate attains the same accuracy, while $\delta$ carries topic difficulty.

\paragraph{Limitations} The positive control is specific only under parameter values at which forgetting is observable, and these values were selected for sensitivity; with the values planned initially the controls are uninformative. Emission parameters are indexed by topic, all results use a single fold, and the replay of Section~\ref{sec:results:extra} inherits the fitted values of $A$, $\alpha$, $\beta$ and $\delta$. The replay reproduces $0.69$--$0.72$ of the observed correlation of $0.83$; the remainder is unexplained. Theorem~\ref{thm:ident} is proved constructively for $K=2$; the $K=5$
synthetic study is empirical evidence of generalisation, and the general-$K$ case remains open.
External structural validation rests on a single dataset with an expert graph. The stochastic term
is not estimated. The real-time variant relies on a log-compressed, quantised grid, though C4
indicates the limitation is one of information content rather than of discretisation.

\paragraph{Future work} The natural continuation follows from the theorem: designed excitation. A
modest intervention --- scheduled rest intervals and blocks of isolated, varied-intensity practice
in a live course --- would supply the regimes the theory requires, turning an unidentifiable
observational problem into an identifiable experimental one. Second, the memory-efficient adjoint
scheme enables on-device parameter estimation without transmitting raw learner logs.

%% =====================================================================
\section{Conclusion}\label{sec:conclusion}
%% =====================================================================

We have presented Cognitive-PINN, a dissipative ODE model of knowledge dynamics whose parameters
carry explicit mechanistic meaning, together with a structural identifiability theorem with
constructive recovery, supporting stability and robustness results, an L-stable implicit
scheme, and a batched solver numerically equivalent to the per-trajectory formulation while being
two orders of magnitude faster.

Our empirical study is deliberately two-sided. Under the excitation conditions required by the
theorem, the framework recovers parameters and coupling structure exactly. On large observational
benchmarks it does not: an apparently strong parameter recovery ($\rho=0.83$) is shown by four controls, an ablation and a replay of the real schedule with known rates to be generated in large part by the schedule and the model structure rather than by topic-specific forgetting: it is reproduced by a classical baseline, unaffected by the removal of temporal information, not needed for accuracy, and largely recovered when the true rates are identical across topics. We trace this to the
stationary structure of the model and to the distribution of practice gaps in the data, and show that it is consistent with the failure mode anticipated by the theory.

The resulting picture is a sharp, testable boundary between identifiable and unidentifiable regimes
of the same system, and a concrete methodological recommendation for a field that increasingly
reports interpretable parameters without testing whether they are identifiable at all.

%% =====================================================================
\section*{CRediT authorship contribution statement}
%% =====================================================================
\textbf{Arman A. Kostanian:} Conceptualization, Methodology, Software, Formal analysis,
Investigation, Writing --- original draft, Visualization. \textbf{Armen L. Beklaryan:}
Supervision, Methodology, Writing --- review \& editing.

\section*{Declaration of generative AI use}
During the preparation of this work the authors used Claude (Anthropic) in order to draft and edit
text and to write analysis code. After using this tool, the authors reviewed and edited the content as
needed and take full responsibility for the content of the article.

\section*{Declaration of competing interest}
The authors declare that they have no known competing financial interests or personal relationships
that could have appeared to influence the work reported in this paper.

\section*{Data availability}
Code is publicly available at \url{https://github.com/armankostanian/cognitive-pinn}. The datasets
are public knowledge-tracing benchmarks; raw learner logs are not redistributed.

%% =====================================================================
\appendix
\section{Completion of the proof of Theorem~\ref{thm:ident}}\label{app:thm}
%% =====================================================================

\subsection{Step 4}
On $I_2$ we have $p_1\equiv0$, hence $\phi_1=0$, and the second component of \eqref{eq:K2} reduces
to $\dot k_2 = A_{22}\left(1-e^{-\alpha_2 p_2(t)}\right)(1-k_2)-\Lambda_2 k_2$. With $\Lambda_2$
known, define $g_2(t) := \left(\dot k_2 + \Lambda_2 k_2\right)/\left(1-k_2\right)
= A_{22}\left(1-e^{-\alpha_2 p_2(t)}\right)$. Since $p_2$ is non-constant, choose
$t_c,t_d\in I_2$ with $0<p_c:=p_2(t_c)<p_d:=p_2(t_d)$ and $s:=p_d/p_c>1$; then
$g_2(t_d)/g_2(t_c)=\tilde H_s(\alpha_2 p_c)$, and Lemma~\ref{lem:H} gives $\alpha_2$ and
$A_{22}=g_2(t_c)/\left(1-e^{-\alpha_2 p_c}\right)$. Finally, from the first component on the same
interval,
\begin{equation}
A_{12} = \frac{\dot k_1(t) + \Lambda_1 k_1(t)}
{\left(1-k_1(t)\right)\left(1-e^{-\alpha_2 p_2(t)}\right)} .
\end{equation}

\subsection{Range of the observed ratio}
Step 2 requires $\rho:=g_1(t_b)/g_1(t_a)\in(1,r)$. That $\rho>1$ follows from strict monotonicity
of $p\mapsto1-e^{-\alpha p}$; that $\rho<r$ follows from Lemma~\ref{lem:H} with $\alpha_1>0$, the
value $r$ being attained only as $\alpha_1 p_a\to0^{+}$. The excluded limits are the two degenerate
regimes of the response function --- vanishing gain and complete saturation --- both of which
reappear quantitatively in Appendix~\ref{app:robust} as sources of ill-conditioning.

\subsection{Minimal informative design}
The proof uses two instants on $I_R$ per concept, two instants at distinct intensities on each of
$I_1,I_2$, and one further instant on each for the off-diagonal entries: seven informative instants
for $K=2$ in the noise-free setting.

%% =====================================================================
\section{Proof of Lemma~\ref{lem:robust} and its empirical range of validity}\label{app:robust}
%% =====================================================================

\subsection{Setting}
Let $\hat{\bm{k}}=\bm{k}+\bm{\xi}$ with $\|\bm{\xi}\|_{\infty}\le\epsilon$ and
$\|\hat{\dot{\bm{k}}}-\dot{\bm{k}}\|_{\infty}\le\delta$. Put $k_{\min}:=\inf k_i>0$,
$u_{\min}:=\inf(1-k_i)>0$ (both positive by Lemma~\ref{lem:box}) and
$\Lambda_{\max}:=\max_i\Lambda_i$.

\subsection{Dissipation rates}
From \eqref{eq:lam}, with $\Delta t := t-t_R^{-}$,
$\hat\Lambda_i-\Lambda_i = -\Delta t^{-1}\left[\log(\hat b/b)-\log(\hat a/a)\right]$. Using
$|\log(1+x)|\le2|x|$ for $|x|\le1/2$,
\begin{equation}
\left|\hat\Lambda_i-\Lambda_i\right| \le \frac{4\epsilon}{\Delta t\,k_{\min}} .
\label{eq:B1}
\end{equation}

\subsection{The observable $g_1$}
Write $g_1=n/u$ with $n:=\dot k_1+\Lambda_1 k_1$, $u:=1-k_1\ge u_{\min}$. Then
$|\hat n-n|\le\delta+\Lambda_{\max}\epsilon+|\hat\Lambda_1-\Lambda_1|$ and
$\left|\hat u^{-1}-u^{-1}\right|\le\epsilon/u_{\min}^{2}$ for $\epsilon<u_{\min}/2$, so with
\eqref{eq:B1},
\begin{equation}
\left|\hat g_1(t)-g_1(t)\right| \le C_g\left(\epsilon+\delta\right).
\label{eq:B2}
\end{equation}

\subsection{The gain}
With $\rho:=g_1(t_b)/g_1(t_a)$ and $g_{\min}:=\inf_{I_1}g_1>0$, the ratio bound and \eqref{eq:B2}
give $\left|\hat\rho-\rho\right| \le 2C_g\rho_{\max}g_{\min}^{-1}(\epsilon+\delta)$. By
Lemma~\ref{lem:H} and the inverse function theorem, for $u$ in a compact
$[u_*,u^{*}]\subset(0,\infty)$ we have
$m_r := \min_{[u_*,u^*]}\left|\tilde H_r'\right|>0$ and
$\left|\hat u-u\right|\le m_r^{-1}\left|\hat\rho-\rho\right|$, whence
\begin{equation}
\left|\hat\alpha_1-\alpha_1\right| \le \frac{C_\alpha}{m_r\,g_{\min}\,p_a}\left(\epsilon+\delta\right).
\label{eq:B4}
\end{equation}
Differentiating $A_{11}=g_1(t_a)/(1-e^{-\alpha_1 p_a})$ and inserting \eqref{eq:B2},
\eqref{eq:B4} yields a linear bound for $A_{11}$; the off-diagonal entry \eqref{eq:A21} involves no
inversion and obeys a linear bound directly. Collecting the four estimates gives
Lemma~\ref{lem:robust} with
$C^{\dagger}=C^{\dagger}\!\left(k_{\min},u_{\min},\Lambda_{\max},m_r,g_{\min},p_a,\tau_R,\tau_1,\tau_2\right)$.

\subsection{What the constant contains}
Reconstruction degrades when the rest interval is short ($\tau_R\to0$, by \eqref{eq:B1}), when
mastery is near either boundary ($k_{\min}\to0$ or $u_{\min}\to0$), and when the two practice
intensities are close ($r\to1$, hence $m_r\to0$). The third item is the quantitative counterpart of
condition (S1): non-constant practice is not merely a requirement for uniqueness --- the
\emph{degree} of variation controls the conditioning of the entire reconstruction.

\subsection{Empirical range of validity}
We reconstruct known parameters from trajectories perturbed at several noise levels
(Table~\ref{tab:noise}).

\begin{table}[h]
\centering
\caption{Relative reconstruction error versus observation-noise amplitude ($K=2$).}
\begin{tabular}{lcccccc}
\toprule
$\epsilon$ & $0$ & $0.005$ & $0.010$ & $0.020$ & $0.050$ & $0.100$ \\
\midrule
rel.\ error & $0.0$ & $0.107$ & $0.180$ & $0.75$ & $34.2$ & $30.4$ \\
\bottomrule
\end{tabular}
\label{tab:noise}
\end{table}

For $\epsilon\le0.01$ the growth is compatible with the linear bound, and the noise-free
reconstruction is exact to machine precision, confirming Theorem~\ref{thm:ident} numerically.
Beyond $\epsilon\approx0.02$ the error grows abruptly. This is not a failure of the bound but a
departure from its hypotheses: once the perturbation is large enough that $\hat\rho$ leaves the
range $(1,r)$ of $\tilde H_r$, the inversion in Step 2 has no solution. The practical reading is
that closed-form reconstruction requires low observation noise or averaging over many informative
instants, and that $r$ should be chosen comfortably above unity.

\subsection{Design corollary}
Lemma~\ref{lem:robust} converts Theorem~\ref{thm:ident} into instructions for data collection:
schedule genuine rest intervals of non-negligible length, avoid evaluating learners only near
mastery floors or ceilings, and vary practice intensity by a factor of at least two within
isolated-practice blocks. It is precisely the absence of these features from passively collected
logs that Section~\ref{sec:results} documents empirically.

%% =====================================================================
\section{Proof of Proposition~\ref{prop:equiv}}\label{app:batched}
%% =====================================================================

\emph{(i) Identity of the vector field.} For a single learner the acquisition term of
\eqref{eq:semilinear} is $\bm{A}\bm{\phi}$ with $\bm{\phi}\in\mathbb{R}^{K}$. For a batch, writing
$\bm{\Phi}\in\mathbb{R}^{B\times K}$ for the stacked responses, the corresponding expression is
$\bm{\Phi}\bm{A}^{\top}$, whose $(b,i)$ entry is
$\sum_j \Phi_{bj}A_{ij} = \left[\bm{A}\bm{\phi}^{(b)}\right]_i$. The two agree entry by entry; the
remaining terms act element-wise and are unaffected by stacking.

\emph{(ii) Locality of the integrator.} A Runge--Kutta scheme with fixed step computes
$\bm{k}_{n+1}$ from $\bm{k}_n$ and evaluations on $[t_n,t_{n+1}]$ only. Hence integrating learner
$b$ on the longer padded grid produces the same values at nodes $n\le N_b$ as integrating on
$\{0,\dots,N_b\}$: continuation beyond $N_b$ cannot influence earlier nodes.

\emph{(iii) Identity of the objective.} Because interaction times are integers, observations fall
exactly on grid nodes, so the gather used in the batched path returns the same states that
interpolation returns in the per-trajectory path. The data term is a masked sum over interactions;
summation order is the only difference. Both paths therefore compute the same function of the
parameters, and their gradients coincide by uniqueness of the derivative. \qed

\paragraph{Remark on non-integer grids} The proposition relies on the common integer axis. With
continuous node times the union grid has size $O(\sum_b N_b)$ rather than $O(T_{\max})$, and the
equivalence, while formally preserved, ceases to be useful: the computation becomes
memory-infeasible and, with node spacings spanning six orders of magnitude, numerically unstable.
The quantised reparameterisation of Section~\ref{sec:numerics:time} restores the hypotheses.

%% =====================================================================
\bibliographystyle{elsarticle-num}
\bibliography{references}
%% For local test compile without elsarticle, use:
% \bibliographystyle{unsrt}

\end{document}